\newcommand{\chara}{\mathcal{\mathfrak{a}}}
\newcommand{\charb}{\mathcal{\mathfrak{b}}}
\newcommand{\charc}{\mathcal{\mathfrak{c}}}
\newcommand{\charl}{\mathcal{\mathfrak{L}}}
\newcommand{\charr}{\mathcal{\mathfrak{R}}}
\newcommand{\chare}{\mathcal{\mathfrak{E}}}
\newcommand{\acceptable}{\smiley}
\newcommand{\rejected}{\frownie}
\newtheorem{theorem}{Theorem}[section]
\newtheorem{definition}[theorem]{Definition}
\newtheorem{lemma}[theorem]{Lemma}
\newtheorem{proposition}[theorem]{Proposition}
\newlist{abbrv}{itemize}{1}
\setlist[abbrv,1]{label=,labelwidth=1in,align=parleft,itemsep=0.1\baselineskip,leftmargin=!}
\begin{document}

\title{A provably stable neural network Turing Machine}

\author{\name John Stogin*  \email jstogin@gmail.com \\
       \addr Independent Researcher\\
       Chicago\\
       IL, 60654, USA
       \AND
       \name Ankur Mali*  \email ankurarjunmali@usf.edu \\
       \addr Department of Computer Science and Engineering\\
       University of South Florida\\
       Tampa, FL 33620, USA
       \AND
       \name Lee Giles \email clg20@psu.edu \\
       \addr College of Information Sciences and Technology\\
       The Pennsylvania State University\\
       University Park, PA 16802, USA
       }


\maketitle
\begin{abstract}
We introduce a neural stack architecture, including a differentiable parametrized stack operator that approximates stack push and pop operations for suitable choices of parameters that explicitly represents a stack. We prove the stability of this stack architecture: after arbitrarily many stack operations, the state of the neural stack still closely resembles the state of the discrete stack. Using the neural stack with a recurrent neural network, we introduce a neural network Pushdown Automaton (nnPDA) and prove that nnPDA with finite/bounded neurons and time can simulate any PDA. Furthermore, we extend our construction and propose new architecture neural state Turing Machine (nnTM). We prove that differentiable nnTM with bounded neurons can simulate TM in real time. Just like the neural stack, these architectures are also stable. Finally, we extend our construction to show that differentiable nnTM is equivalent to Universal Turing Machine (UTM) and can simulate any TM with only \textbf{seven finite/bounded precision} neurons. This work provides a new theoretical bound for the computational capability of bounded precision RNNs augmented with memory.
\footnote{First two authors contributed equally}
\end{abstract}

\begin{keywords}
  Turing Completeness, Universal Turing Machine, Tensor RNNs, Neural Tape, Neural Stack, Stability, Finite Precision.
\end{keywords}

%




\section{Introduction}

In formal language theory, Chomsky (\cite{chomsky1959algebraic}) classified languages into four levels of increasing complexity. Associated to each level is a class of state machines, or automata, (\cite{sipser1996introduction}) suitably complex to recognize languages at that level. At the lowest level are the finite state automata (FSA) which can recognize regular languages. At the next level, are the Pushdown automata (PDA) which are FSA augmented with a stack. These can recognize context-free languages. At the highest level are Turing machines, which are FSA augmented with a tape. These can recognize all computable languages.

Although technically Turing complete, recurrent neural networks are most analogous to FSAs, the least powerful in the hierarchy. To properly represent more powerful state machines, an additional memory architecture, representing at least the capabilities of a stack, is required. By combining two stacks into a tape, one can then reach the most general Turing Machine.

A proper neural representation of a stack should have the following properties.
\begin{itemize}
\item The representation should be differentiable: stack operations should be learnable by training on data and tuning weights using a standard gradient descent approach.
\item The representation should be natural: the architecture should insert the researcher's prior that a stack is appropriate for solving the problem at hand.
\item The representation should be stable: the architecture should faithfully represent a discrete stack after arbitrarily many operations despite the imperfection of learned parameters.
\end{itemize}

Das (\cite{nndpa1998sun, das1992learning}) proposed a natural differentiable stack. This stack maintained a sequence of blocks with variable size (ranging between 0 and 1) and read a combination of characters based on the set of blocks comprising the top of the stack. So, for example, after pushing 0.8 of character $\chara$ and then pushing 0.9 of character $\charb$, the top of the stack would read $0.9\charb + 0.1\chara$, with the remaining portion of $\chara$ below the top of the stack. Then after popping with intensity 0.3, the top of the stack would read $0.6\charb + 0.4\chara$. This representation is both differentiable and natural. But is it stable?

Unfortunately it is not. Take, for example, a pair of push-then-pop operations, which should in theory leave the stack unchanged. If their intensities don't perfectly match, they leave the state of the stack permanently altered. For example, pushing 0.98 of $\chara$ and then popping 0.97 leaves 0.01 of $\chara$ left on the stack. After doing this 100 times, the top of the stack will then read $\chara$ entirely, despite the fact that $\chara$ was essentially popped just as many times as it was pushed. At this point, the stack no longer behaves at all like the discrete stack it was supposed to represent. This is an instability.

More recent work on memory-augmented neural networks (\cite{grefenstette2015learning, joulin2015inferring, mali2019neural, graves2014neural, graves2016hybrid}) provide computational approaches to differentiable stacks and tapes.  For instance \cite{joulin2015inferring} discretize and round their stack operator at test time to ensure it works stably on longer strings. \cite{grefenstette2015learning} suffer a similar issue as \cite{nndpa1998sun}, and furthermore their approach relies on an assumption, rather than a proof, of stability. Other forms of NNs such as the Neural Turing Machine \cite{graves2014neural} and the Differential neural computer \cite{graves2016hybrid} suffer from similar stability issues. These models are well designed for a gradient descent approach to learning, but in no way can guarantee stability.

Theoretically RNNs (\cite{siegelmann94}) and transformers (\cite{attn_turing}) are Turing complete. However these representations are unnatural: the tape size depends on the precision of the floating point values and the tape cannot be distinguished from the rest of the architecture. This is also true for less powerful theoretical constructions (\cite{gru2019pda}) such as Gated recurrent unit (GRU) networks that are shown to be equivalent to PDA.
Recently (\cite{chung2021turing}) showed that a recurrent neural network with bounded neurons is Turing Complete and represents two stacks by encoding it within the growing memory module with discrete variables, but this representation is not differentiable. These constructions (\cite{siegelmann94, attn_turing, chung2014empirical, gru2019pda}) do not guarantee stability with a differentiable operator.

The contribution of this paper is the introduction of differentiable, natural, and stable automata based on a neural stack. The neural stack and its associated parametrized stack operator lie at the core of this paper. Once the stack is shown to be stable, the nnPDA is introduced as a recurrent neural network together with the neural stack. Finally, a tape is represented as two neural stacks joined end-to-end and used to introduce the nnTM. Crucially, the stability implies that these automata can handle arbitrarily long strings, which is not valid for other theoretical results (\cite{chung2014empirical, attn_turing, siegelmann95} or computational models \cite{nndpa1998sun, graves2014neural, mali2020neural, joulin2015inferring}). This is an essential property for designing neuro-symbolic architectures, especially when extending to out-of-distribution samples. Finally, we observe in light of the Universal Turing Machine (UTM) work of \cite{neary2007four} that a small number (such as seven) of bounded neurons are required to simulate any TM with this architecture.

\section{Related Work}

Historically, grammatical inference (\cite{gold1967language}) has been at the core of language learnability and could be considered to be fundamental in understanding important properties of natural languages. 
A summary of the theoretical work in formal languages and grammatical inference can be found in (\cite{de2010grammatical}).
Applications of formal language work have led to methods for predicting and understanding sequences in diverse areas, such as financial time series, genetics and bioinformatics, and software data exchange (\cite{giles2001noisy,wieczorek2016use,exler2018grammatical}). 
Many neural network models take the form of a first order (in weights) recurrent neural network (RNN) including Long short term memory (LSTM), Gated Recurrent Unit (GRU) and other variants and have been taught to learn context free and context-sensitive counter languages (\cite{lstmcfg,das1992learning,boden2000context,tabor2000fractal,wiles1995learning,sennhauser2018evaluating,nam2019number,wang2019stateregularized,cleeremans1989finite,kolen1994recurrent,cleeremans1989finite,weiss2017extracting}).  However, from a theoretical perspective, RNNs augmented with an external memory have historically been shown to be more capable of recognizing context free languages (CFLs), such as with a discrete stack (\cite{das1993using,pollack1990recursive,sun1997neural}), or, more recently, with various differentiable memory structures (\cite{joulin2015inferring,grefenstette2015learning,graves2014neural,kurach2015neural,zeng1994discrete,hao2018context,yogatama2018memory,graves2016hybrid,le2019neural,mali2019neural,mali2020recognizing}). Despite positive results, prior work on CFLs was unable to achieve perfect generalization on data beyond the training dataset, highlighting a troubling difficulty in preserving long term memory (\cite{sennhauser2018evaluating,mali2020neural, mali2021recognizing, mali2021recognizing2, suzgun2019memoryaugmented}). 

This is also true for other sequential models such as Transformers \cite{bhattamishra2020ability}, since they are restricted due to other positional encoding and cannot go beyond a sequence length. Theoretically self-attention which is a core element for recent success behind transformers based variants cannot even recognize CFLs even using \textbf{infinite precision} (\cite{hahn2020theoretical}) in weights.

Prior work (\cite{omlin1996stable}) have shown that due to instability issues neural networks can struggle in recognizing simplest grammars such as regular grammars when tested on longer strings.  Given RNNs acts which should be the same for states in a stack RNN learning a PDA since a PDA is DFA controlling a stack. 
Early work primarily focused on constructing DFA in recurrent networks with hard-limiting neurons (\cite{alon1991efficient,horne1994bounds,minsky1967computation}, sigmoidal \cite{omlin1996stable,giles1993extraction,omlin1996extraction,omlin1996constructing} and radial-basis functions \cite{alquezar1995algebraic,frasconi1992injecting}). The importance and equivalence of differentiable memory over a discrete stack while learning is still unclear (\cite{joulin2015inferring,mali2020recognizing}). Recently, (\cite{merrill2020formal}) theoretically studied several types of RNNs and transformers analyzed there capability to understand various classes of languages with finite precision and time. In this paper we provide a new theoretical bound and show nnTM is Turing complete with bounded/finite precision and time.

\section{Vector Representation of a Pushdown Automaton}\label{pda_sec}

The state machine responsible for recognizing a context free grammar (CFG) is a pushdown automaton (PDA). In this section, we review the PDA and describe how it can be represented using vector and tensor quantities, which are analogous to weights in a neural network. A pushdown automaton is a deterministic finite state automaton (DFA) that controls a stack.\cite{hopcroft2pda} We review the DFA in \S\ref{fsa_sec}, the stack in \S\ref{pda_stack_sec}, and the PDA in \S\ref{pda_sub_sec}. An \textbf{example CFG} and a vector representation of a corresponding PDA is provided in \S\ref{example_CFG_sec}.

\subsection{The Deterministic Finite-State Automaton (DFA)}\label{fsa_sec}

Here, we review the DFA and provide a vector representation.
First we show classical version of DFA which is adopted from the prior work. We then introduce dynamic version and vector representation, which is much closer to state transition for the sequential models such as RNNs. 
Formally, a DFA is defined as a 5-tuple, which is represented as follows:

\begin{definition}\emph{(DFA, classical version)}
  A deterministic finite state automaton is a quintuple $(Q, \Sigma, \delta, q_0, F)$, where
  $Q = \{q_1,...,q_n\}$ is a finite set of states, 
  $\Sigma = \{\chara_1,...,\chara_m\}$ is a finite alphabet,
  $\delta:Q\times \Sigma\rightarrow Q$ is the state transition function, and 
  $q_0\in Q$ is a start state,
  $F\subset Q$ is a set of acceptable final states.
\end{definition}

A string of $l$ characters $\chara^1...\chara^l$ can be fed to the DFA, which in turn changes state according to the following rules.
$$q^{t=0} = q_0$$
$$q^{t+1} = \delta(q^t, \chara^{t+1}).$$
As we are mainly interested in dynamics, we will work in the context of an arbitrary string. 

The following simplified definition focuses on the dynamics of the DFA which will be useful in creating a connection between DFA state transition and RNN state transition.

\begin{definition}\emph{(DFA, dynamic version)}
  At any time $t$, the state of a finite-state automaton is
  $$q^t\in \{q_1,...,q_n\}.$$
  The next state $q^{t+1}$ is given by the dynamic relation
  $$q^{t+1} = \delta^{t+1}(q^t),$$
  where $\delta^{t+1}(q) = \delta(q, \chara^{t+1})$.
\end{definition}

To easily generalize to a neural network with tensor connections, we will further modify the above definition to obtain a vector representation of the DFA.

\begin{definition}\label{FSA_def_3}\emph{(DFA, vectorized dynamic version)}
  At any time $t$, the state of a deterministic finite-state automaton is represented by the state vector\footnote{The use of the bar in $\bar{Q}^t$ is in preparation for neural networks. As neural networks are the main focus of this paper, we will generally use $Q^t$ to represent the state vector for a neural network and $\bar{Q}^t$ to represent the idealized state vector introduced here.}
  $$\bar{Q}^t\in\{0,1\}^n,$$
  whose components $\bar{Q}^t{}_i$ are
  \begin{equation*}
    \bar{Q}^t{}_i = \left\{
      \begin{array}{ll}
        1 & q_i = q^t \\
        0 & q_i \neq q^t.
      \end{array}
    \right.
  \end{equation*}

  The next state vector $\bar{Q}^{t+1}$ is given by the dynamic relation
  $$\bar{Q}^{t+1} = W^{t+1}\cdot\bar{Q}^t,$$
  where the transition matrix $W^{t+1}$, determined by the transition tensor $W$ and the input vector $I^{t+1}$, is
  $$W^{t+1}=W\cdot I^{t+1}.$$
The input vector $I^{t+1}\in\{0,1\}^m$ has components
  \begin{equation*}
    I^{t+1}{}_j = \left\{
      \begin{array}{ll}
        1 & \chara_j = \chara^{t+1} \\
        0 & \chara_j \neq \chara^{t+1},
      \end{array}
    \right.
  \end{equation*}
  and the transition tensor $W$ has components
  \begin{equation*}
    W_i{}^{jk} = \left\{
      \begin{array}{ll}
        1 & q_i = \delta(q_j, \chara_k) \\
        0 & q_i \neq \delta(q_j, \chara_k). 
      \end{array}
    \right.
  \end{equation*}
  In component form, the dynamic relation can be rewritten as
  \begin{equation}\label{fsa_dynamics}
    \bar{Q}^{t+1}{}_i = \sum_{jk} W_i{}^{jk}I^{t+1}{}_k\bar{Q}^t{}_j.
  \end{equation}
\end{definition}

The reader may wish to check that all three definitions are consistent with each other. The first definition is popular in the literature, but the third is most analogous to a neural network.

\subsection{The stack}\label{pda_stack_sec}

We turn our attention to the stack. Based on chomsky hierarchy DFA can only model regular languages and to model complex languages such as Context-free languages we need access to memory. The computational model designed to learn and recognize CFG's is known as PDA, which is DFA augmented with stack.
A PDA is then defined as a 7-tuple, the extra two elements corresponding to the stack. 

In this section we will introduce stack and define various vectorized operator which will serve as important element to show comparison with differentiable stack. The differentiable operator will be introduced in section 4. The classic version of stack is shown below:
\begin{definition}\emph{(Stack, classical version)}
  A stack is a pair $(\Gamma, e)$, where $\Gamma=\{\charb_1,...,\charb_{m_2}\}$ is the stack alphabet and $e$ is the error character.
\end{definition}

At any given time, a stack holds a (possibly empty) string of characters. That is, its state can be represented by the following definition.

\begin{definition}
  Let $\Gamma^*$ represent the set of strings constructed by the following rules.
  \begin{itemize}
  \item $\epsilon\in\Gamma^*$,
  \item $\charc S\in \Gamma^*$ for all $\charc\in\Gamma$ and $S\in\Gamma^*$.
  \end{itemize}
  Here, $\epsilon$ represents the empty string.
\end{definition}

There are a few stack operators, which manipulate the character string held by the stack.

\begin{definition}\label{stack_ops_def}\emph{(Stack operators, classical version)}
  Define the set of stack operators
  $$op(\Gamma)=\{\pi_-,\pi_0,\pi_+(\charb_1),...,\pi_+(\charb_{m_2}))\},$$
  where $\pi_-$ represents a pop operation, $\pi_0$ is the identity operation, and $\pi_+$ is a family of push operators indexed by $\Gamma$. They act on strings in $\Gamma^*$ according to the following rules.
  $$\pi_-\epsilon = \epsilon$$
  $$\pi_-\charc S = S$$
  $$\pi_0S = S$$
  $$\pi_+(\charc)S = \charc S.$$
  Finally, define the read function $r$ according to the following rules.
  $$r(\epsilon) = e$$
  $$r(\charc S) = \charc.$$
\end{definition}

To easily generalize to a neural network, we will modify the above stack definition to obtain a vector representation.

\begin{definition}\label{stack_vector_encoding}\emph{(Stack, vectorized version)}

  Let
  $$\mathcal{B}_{(m_2)}=\{(1,0,...,0), (0,1,0,...,0), ..., (0,...,0,1)\}$$
  be the canonical basis of $\mathbb{R}^{m_2}$.
  
  A stack $\bar{K}$ is a vector-valued sequence
  $$\bar{K}=\{\bar{K}_0,\bar{K}_1,...\}, \hspace{.5in} \bar{K}_i\in\mathcal{B}_{(m_2)}\cup\{\vec{0}\},$$
  with the additional requirement that there exists an integer $s$, called the \emph{stack size}, such that
  $$\bar{K}_i\in \mathcal{B}_{(m_2)}\hspace{.5in} i< s$$
  and
  $$\bar{K}_i = \vec{0}\hspace{.5in} s \le i.$$
  The vector-valued sequence $\bar{K}$ can be determined from a stack state $k\in\Gamma^*$ by setting $\bar{K}_i$ to be the one-hot encoding of the $i$th character $r((\pi_-)^ik)$ of $k$, using $\vec{0}$ to represent $e$.
\end{definition}

For the vectorized stack, we group all the stack operators into a single parametrized operator as defined here.

\begin{definition}\label{stack_operator_def_1}\emph{(Parametrized stack operator)}
  For any scalar pair
  $$(\bar{p}_+,\bar{p}_-)\in \{(1,0),(0,1),(0,0)\}$$
  and vector
  $$\bar{C}\in\mathcal{B}_{(m_2)},$$
  define the stack operator $\bar{\pi}(\bar{p}_+,\bar{p}_-,\bar{C})$ by how it acts on the stack $\bar{K}$ according to the following relations.
  $$(\bar{\pi}(\bar{p}_+,\bar{p}_-,\bar{C})\bar{K})_{i\ge 1} = \bar{p}_+\bar{K}_{i-1}+\bar{p}_-\bar{K}_{i+1}+(1-\bar{p}_+-\bar{p}_-)\bar{K}_i$$
  $$(\bar{\pi}(\bar{p}_+,\bar{p}_-,\bar{C})\bar{K})_0 = \bar{p}_+\bar{C}+\bar{p}_-\bar{K}_1+(1-\bar{p}_+-\bar{p}_-)\bar{K}_0$$
\end{definition}

It is worth verifying that according to the above definition, the operator $\bar{\pi}(1,0,\bar{C})$ pushes the vector $\bar{C}$ onto the stack, the operator $\bar{\pi}(0,1,\bar{C})$ pops the stack, and the operator $\bar{\pi}(0,0,\bar{C})$ is the identity operator.

In tensor products that will be used momentarily, if any factor is zero, the entire expression becomes zero. Therefore, instead of using the top of the stack $\bar{K}$ directly, we will instead use the stack reading vector $\bar{R}$, which has one additional component that is $1$ when $\bar{K}_0=\vec{0}$ and $0$ otherwise.

\begin{definition}\label{Rbar_def}\emph{(Stack Reading)}
  Given the top vector of the stack $\bar{K}_0$ with dimension $m_2$, we define the stack reading vector $\bar{R}$ to be a vector of dimension $m_2+1$, with components
$$
\bar{R}_i=\left\{
\begin{array}{ll}
    (\bar{K}_0)_i & 0\le i<m_2 \\
    1-||\bar{K}_0||_{L^\infty} & i=m_2.
\end{array}\right.
$$
\end{definition}

\subsection{The Pushdown Automaton (PDA)}\label{pda_sub_sec}

Recall that a PDA is a DFA with a stack. The following definition of a PDA closely resembles Hopcroft's definition. \cite{hopcroft2pda}\footnote{There is a minor discrepancy in that Hopcroft's definition allows for an additional operation that corresponds to one pop followed by two push operations. This is mainly provided for convenience and is not an essential part of the definition.}

\begin{definition}\label{classic_pda_def}\emph{(PDA, classical version)}
  A pushdown automaton is a 7-tuple $(Q, \Sigma, \Gamma, \delta, q_0, e, F)$, where
  $Q=\{q_1,...,q_n\}$ is a finite set of states,
  $\Sigma=\{\chara_1,...,\chara_{m_1}\}$ is the finite input alphabet,
  $\Gamma=\{\charb_1,...,\charb_{m_2}\}$ is the finite stack alphabet,
  $\delta:Q\times\Sigma\times\Gamma\rightarrow Q\times op(\Gamma)$ is the transition function with $op(\Gamma)$ being the set of stack operators from Definition \ref{stack_ops_def},
  $q_0\in Q$ is the start state,
  $e$ is the stack error character,
  and $F\subset Q$ is a set of acceptable final states.
\end{definition}

Let $\delta_Q:Q\times\Sigma\times\Gamma\rightarrow Q$ and $\delta_{op}:Q\times\Sigma\times\Gamma\rightarrow op(\Gamma)$ represent the first and second part of the transition function $\delta$. Let $q\in Q$ and $k\in\Gamma^*$. A string of $l$ characters $\chara^1...\chara^l$ can be fed to the PDA, which changes state and stack state according to the following rules.
$$q^{t=0} = q_0$$
$$k^{t=0} = \epsilon$$
$$q^{t+1} = \delta_Q(q^t,\chara^{t+1},r(k^t))$$
$$k^{t+1} = \delta_{op}(q^t,\chara^{t+1},r(k^t))k^t.$$

As we are mainly interested in dynamics, we will work in the context of an arbitrary string. The following simplified definition focuses on the dynamics of the PDA.

\begin{definition}\emph{(PDA, dynamic version)}
  At any time $t$, the state of a pushdown automaton is
  $$(q^t,k^t)\in \{q_1,...,q_n\}\times \Gamma^*.$$
  The next state $(q^{t+1},k^{t+1})$ is given by the dynamic relations
  $$q^{t+1} = \delta_Q^{t+1}(q^t,k^t)$$
  $$k^{t+1} = \delta_{op}^{t+1}(q^t,k^t)k^t,$$
  where $\delta_Q^{t+1}(q,k) = \delta_Q(q,\chara^{t+1},r(k))$ and $\delta_{op}^{t+1}(q,k) = \delta_K(q,\chara^{t+1},r(k))$.
\end{definition}

To easily generalize to a neural network, we will further modify the above definition to obtain a vector representation of the PDA.

\begin{definition}\label{pda_vec_def}\emph{(PDA, vectorized dynamic version)}
  At any time $t$, the state of a pushdown automaton is represented by the pair
  $$(\bar{Q}^t, \bar{K}^t),$$
  where $\bar{Q}^t$ is a one-hot vector encoding the state $q^t$ as in Definition \ref{FSA_def_3} and $\bar{K}^t$ is a vector encoding of the stack $k^t$ according to Definition \ref{stack_vector_encoding}.

  The next state pair $(\bar{Q}^{t+1},\bar{K}^{t+1})$ is given by the dynamic relations
  \begin{align}
    \bar{Q}^{t+1} &= W_Q^{t+1}\cdot \bar{Q}^t \label{pda_dynamics_start_eqn} \\
    \bar{K}^{t+1} &= \bar{\pi}^{t+1}\bar{K}^t, \label{pda_dynamics_end_eqn}
  \end{align}
  where the transition matrix $W_Q^{t+1}$ and operator $\bar{\pi}^{t+1}$ are defined as follows.

  The transition matrix $W_Q^{t+1}$ is given  in component form by
  $$(W_Q^{t+1})_i{}^j = \sum_{k,l}(W_Q)_i{}^{jkl}\bar{R}^t{}_l I^{t+1}{}_k,$$
  where the components $(W_Q)_i{}^{jkl}$ are chosen to represent the transition function $\delta_Q$ as in Definition \ref{FSA_def_3} and the stack reading $\bar{R}^t$ is determined from $\bar{K}_0^t$ as described by Definition \ref{Rbar_def}.

  The operator $\bar{\pi}^{t+1}$ is given by
  $$\bar{\pi}^{t+1}= \bar{\pi}(\bar{p}_+^{t+1}, \bar{p}_-^{t+1}, \bar{C}^{t+1}),$$
  where the parameters $\bar{p}_\pm^{t+1}$ and vector $\bar{C}^{t+1}$ are
  $$\bar{p}_{\pm}^{t+1} = \sum_{j,k,l} (W_{p_\pm})^{jkl}\bar{R}^t{}_lI^{t+1}{}_k \bar{Q}^t{}_j$$
  $$\bar{C}^{t+1}{}_i = \sum_{j,k,l}(W_C)_i{}^{jkl}\bar{R}^t{}_lI^{t+1}{}_k \bar{Q}^t{}_j$$
  where the components $(W_{p_+})^{jkl}$, $(W_{p_-})^{jkl}$, and $(W_C)_i{}^{jkl}$ are chosen to represent the transition function $\delta_{op}$ the same way the components of $W_Q$ are chosen.

\end{definition}

The reader may wish to check that all three PDA definitions are consistent with each other, given appropriate choices for the tensors $W_Q$, $W_{p_+}$, $W_{p_-}$, and $W_C$. The following example may be helpful.

\subsection{An Example PDA}\label{example_CFG_sec}
For illustrative purposes, we give an example here of a vectorized PDA for the grammer of balanced parentheses. The goal is to classify strings with characters `(' and `)' based on whether they close properly. So for example, the strings $``()"$, $``(())"$, and $``(()())"$ are valid, while the strings $``)"$, $``(()"$, and $``(()))"$ are invalid. To avoid confusion, we'll use $\charl$ to represent the left parenthesis `(' and $\charr$ to represent the right parenthesis `)', and introduce $\chare$ to represent the end of the string. With these representations, the example valid strings are represented by $\charl\charr\chare$, $\charl\charl\charr\charr\chare$, and $\charl\charl\charr\charl\charr\charr\chare$, while the example invalid strings are represented by $\charr\chare$, $\charl\charl\charr\chare$, and $\charl\charl\charr\charr\charr\chare$. 

For tensors $W$ with indices $j$, $k$, and $l$, the $j$ index is paired with state input. Our PDA will have $n=2$ states representing either acceptable $\acceptable$ or rejected $\rejected$. We will represent these with two dimensional vectors the following way.
$$
\acceptable\rightarrow \begin{pmatrix} 1\\0\end{pmatrix}
\hspace{.5in}
\rejected\rightarrow \begin{pmatrix} 0\\1\end{pmatrix}
$$

For tensors $W$ with indices $j$, $k$, and $l$, the $k$ index is paired with the input character. Since the input character alphabet has size $m_1=3$, we will represent the input characters as vectors the following way.
$$
\charl\rightarrow \begin{pmatrix} 1\\0\\0\end{pmatrix}
\hspace{.5in}
\charr\rightarrow \begin{pmatrix} 0\\1\\0\end{pmatrix}
\hspace{.5in}
\chare\rightarrow \begin{pmatrix} 0\\0\\1\end{pmatrix}
$$

Finally, for tensors $W$ with indices $j$, $k$, and $l$, the $l$ index is paired with the stack reading vector. Our stack alphabet has size $m_2=1$ as a single character must be pushed each time $\charl$ is encountered and popped each time $\charr$ is encountered. Although the stack vector has dimension $m_2=1$, the reading from stack has dimension $m_2+1=2$, with the additional dimension representing an empty stack reading. The possible stack reading vectors are as follows.
$$
nonempty=\begin{pmatrix} 1\\0\end{pmatrix}
\hspace{.5in}
empty=\begin{pmatrix} 0\\1\end{pmatrix}
$$

\subsubsection{State Transition Rules}
Given a stack reading and input character, which fix $l$ and $k$ indices respectively, the $i$ and $j$ components of the transition tensor $(W_Q)_i{}^{jkl}$ can be interpreted as entries in an adjacency matrix that acts on the input state vector to give the output state vector.

If $\charl$ is encountered, we should not change state. Thus,
$$(W_Q)_i{}^{j00}=(W_Q)_i{}^{j01}=\begin{pmatrix} 1&0\\0&1\end{pmatrix}.$$
If $\charr$ is encountered, then we should not change state unless the stack is empty, in which case we should change from $\acceptable$ to $\rejected$. Thus,
$$
(W_Q)_i{}^{j10}=\begin{pmatrix} 1&0\\0&1\end{pmatrix}
\hspace{.5in}
(W_Q)_i{}^{j11}=\begin{pmatrix} 0&0\\1&1\end{pmatrix}.
$$
Finally, if $\chare$ is encountered, then we should not change state unless the stack is non-empty, in which case we should change from $\acceptable$ to $\rejected$. Thus,
$$
(W_Q)_i{}^{j20}=\begin{pmatrix} 0&0\\1&1\end{pmatrix}
\hspace{.5in}
(W_Q)_i{}^{j21}=\begin{pmatrix} 1&0\\0&1\end{pmatrix}.
$$

\subsubsection{Stack Action Rules}

The only case where we push is when encountering $\charl$. Thus,
$$
(W_{p_+})^{jkl}=\left\{
\begin{array}{ll}
    1 & k=0 \\
    0 & otherwise
\end{array}\right.
$$
The only case where we pop is when encountering $\charr$. Thus,
$$
(W_{p_-})^{jkl}=\left\{
\begin{array}{ll}
    1 & k=1 \\
    0 & otherwise
\end{array}\right.
$$
It does not matter what happens to the stack when encountering $\chare$, so for simplicity we may set
$$(W_{p_+})^{j2l}=(W_{p_-})^{j2l}=0.$$

Finally, there is only one character in the stack alphabet, so for simplicity we may set
$$(W_C)_i{}^{jkl} = 1.$$
Alternatively, if we only which to specify a character when pushing, we may instead set
$$
(W_C)_i{}^{jkl}=\left\{
\begin{array}{ll}
    1 & k=0 \\
    0 & otherwise.
\end{array}\right.
$$
Either way, the index $i$ must take the value $0$ as the stack alphabet has size $m_2=1$.

\section{Neural Network with Stack Memory}\label{model_sec}

This section is the heart of the paper. In \S\ref{continuous_stack_sec}, we define the differentiable stack, and then in \S\ref{nnpda_def_sec}, we use it to define the neural network pushdown automaton (nnPDA). Finally, in \S\ref{nnpda_stability_sec}, we prove that the nnPDA stably approximates the PDA.

We will use the logistic sigmoid function
$$h_H(x) = \frac{1}{1+e^{-Hx}}$$
as an activation function. Note that $h_H(0)=\frac12$ and $h_H(x)$ decreases to $0$ as $Hx\rightarrow -\infty$ and increases to $1$ as $Hx\rightarrow\infty$. The scalar $H$ is a sensitivity parameter. In practice, we will often show that $x$ is bounded away from $0$, and then assume $H$ to be a positive constant sufficiently large so that $h_H(x)$ is as close as desired to either $0$ or $1$, depending on the sign of the input $x$. (See Lemma \ref{stack_op_lem_2}.) The reader may notice that a few proofs vaguely require that $H$ be sufficiently large. By taking the maximum of all lower bounds for $H$ required by each proof, we may arrive at a value for $H$ that satisfies all of these requirements simultaneously.

Furthermore whenever we refer differentiable operator is closer to ideal/discrete operator or approximates the operator, we refer to fixed point mapping or asymptotic stability. Fixed point of mapping function is defined as follows:
\begin{definition}
Let us assume f : Z $\rightarrow Z$ be a mapping on metric space. Thus a point $z_f$ $\in$ Z is called a fixed point of the mapping such that f($z_f$) = $z_f$
\end{definition}

To achieve this we define stability of $z_f$ as follows:
\begin{definition}
A fixed point $z_f$ is considered stable if there exists an range $R =[i,j] \in Z$ such that $z_f$ $\in$ R and iterations for f starts converging towards $z_f$ for any starting point $z_s$ $\in$ R.

\end{definition}

The continuous function Z $\rightarrow Z$ as the following useful property.

\begin{theorem}
(BROUWER’S FIXED POINT THEOREM) (\cite{boothby1971two}) Under a continuous
mapping function  f : Z $\rightarrow Z$ there exists at least one fixed point.
\end{theorem} 
Later (\cite{omlin1996stable}) showed that for sigmoid function their exists at-least one fixed point such that continuous or differentiable operator either converges to 0 or 1 and solution within that bound always stays stable.
In next section we will define our differentiable stack and their associated operators. 

\subsection{The differentiable stack}\label{continuous_stack_sec}
Here we define a differentiable stack memory~\cite{sun1997neural,grefenstette2015learning,joulin2015inferring}.
\begin{definition}\label{continuous_stack_def}\emph{(Differentiable stack)}
  For a stack alphabet of size $m_2$, a differentiable stack $K$ is a vector-valued sequence
  $$K=\{K_0,K_1,...\},\hspace{.5in} K_i\in [0,1]^{m_2},$$
  with the additional requirement that there exists an integer $s$, called the \emph{stack size}, such that
  $$K_i=\vec{0}\hspace{.25in} \text{ for all } \hspace{.25in}i \ge s.$$
\end{definition}
Note that the vectors $K_i$ in this definition closely resemble the idealized one-hot vectors $\bar{K}_i$ introduced in Definition \ref{stack_vector_encoding}.

While a traditional stack can be modified by distinct push and pop operations, here we introduce a single differentiable stack operator that, for particular parameter values, can closely resemble either push or pop.
\begin{definition}\label{continuous_stack_operator_def}\emph{(Differentiable stack operator)}
  For scalars $p_+$ and $p_-$ and vector $C\in[0,1]^n$, we define the stack operator $\pi(p_+,p_-,C)$ as follows.
  \begin{equation*}
    ((\pi(p_+,p_-,C)K)_{i\ge 1})_j
    = h_H\left(p_+(K_{i-1})_j+p_-(K_{i+1})_j+(1-p_+-p_-)(K_i)_j-\frac12\right),
  \end{equation*}
  \begin{equation*}
    ((\pi(p_+,p_-,C)K)_{i=0})_j 
    = h_H\left(p_+C_j+p_-(K_1)_j+(1-p_+-p_-)(K_0)_j-\frac12\right).
  \end{equation*}
\end{definition}
This is visualized in Figure \ref{continuous_stack_fig}. Note that the operator $\pi(0,1,C)$ is an differentiable pop operator, the operator $\pi(0,0,C)$ is an differentiable identity operator, and the operator $\pi(1,0,C)$ is an differentiable push operator that pushes the vector $C$ onto the stack, such that differentiable stack operator are nearly equal to ideal stack operators.

\begin{figure}
  \caption{\textbf{Differentiable Stack Operation} Visualization of the operator $\pi(p_+,p_-,C)$ with $p_+=0.7$ and $p_-=0.2$. First, a linear combination of the adjacent stack vectors is computed, then $h_H$ is applied to each component. The stack alphabet size is $m_2=4$. The stack size is $s=5$ before the operator is applied and $s=6$ afterward. As long as $p_+>0$, the stack size will increase by $1$.}
  \includegraphics[width=0.8\textwidth]{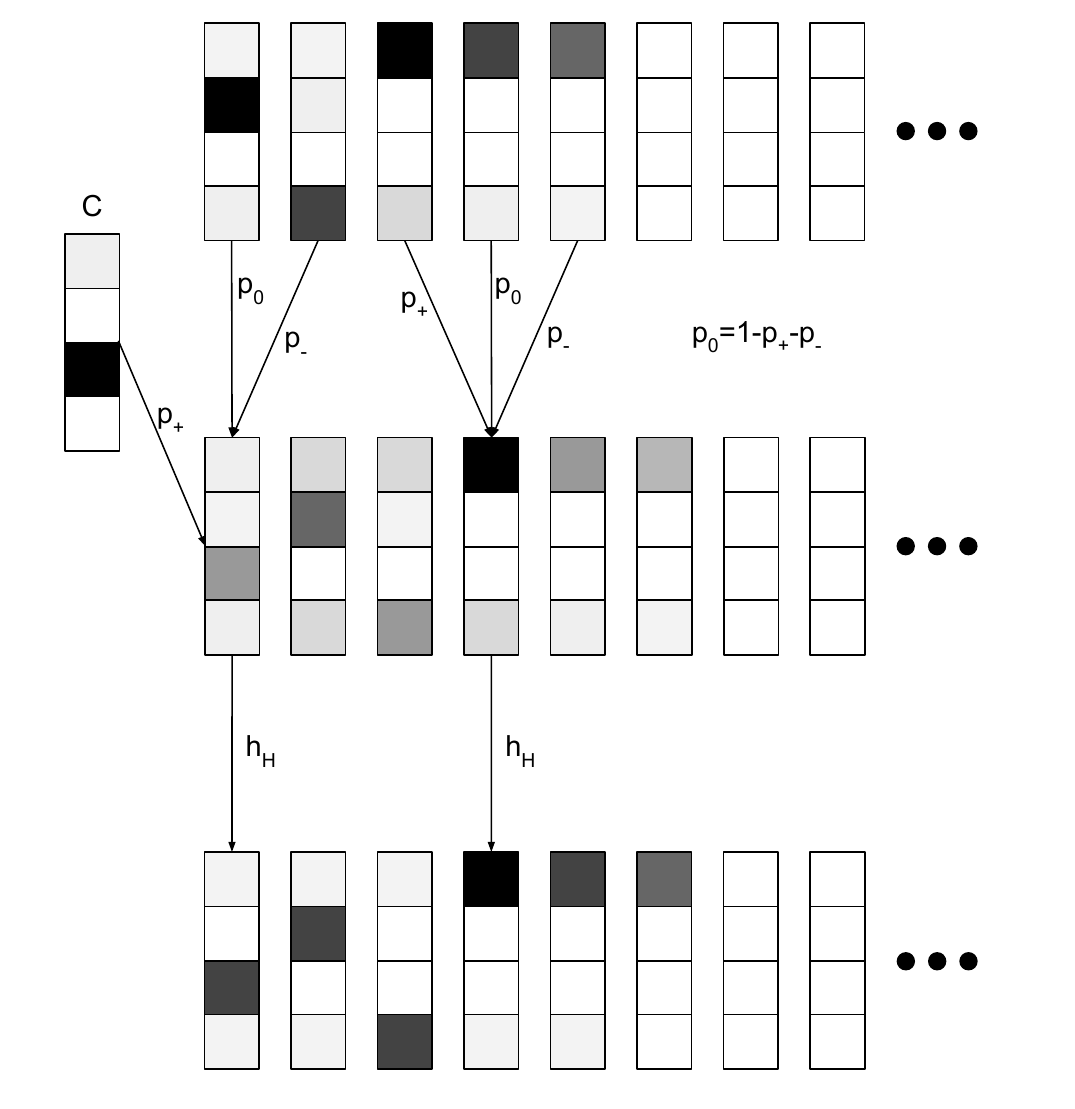}\centering
  \label{continuous_stack_fig}
\end{figure}

Recall Definition \ref{stack_operator_def_1}, wherein a similar idealized operator $\bar{\pi}(\bar{p}_+,\bar{p}_-,\bar{C})$ is defined, for which $\bar{\pi}(0,1,\bar{C})$ is exactly a pop operator, the operator $\bar{\pi}(0,0,C)$ is the identity operator, and the operator $\bar{\pi}(1,0,\bar{C})$ is exactly a push operator that pushes the vector $\bar{C}$ onto the stack. The relation between these two operators is formalized by the following proposition, which serves as a theoretical justification for $\pi(p_+,p_-,C)$.

\begin{proposition}\label{pibar_pi_prop}\emph{($\pi(\cdot,\cdot,\cdot)$ approximates $\bar{\pi}(\cdot,\cdot,\cdot)$)}
  
  Let $K$ be a stack according to Definition \ref{continuous_stack_def} and let $\bar{K}$ be an idealized stack according to Definition \ref{stack_vector_encoding}, and suppose that
  $$\sup_i||K_i-\bar{K}_i||_{L^\infty} \le \epsilon$$
  for some $\epsilon>0$.
  
  Let $\pi(p_+,p_-,C)$ be an operator according to Definition \ref{continuous_stack_operator_def} and $\bar{\pi}(\bar{p}_+,\bar{p}_-,\bar{C})$ be an idealized operator according to Definition \ref{stack_operator_def_1}. Suppose furthermore that, in addition to belonging to the domains of their respective operators, the quantities $p_+$, $p_-$, $C$, $\bar{p}_+$, $\bar{p}_-$, and $\bar{C}$ satisfy
  $$|p_+-\bar{p}_+| \le \epsilon$$
  $$|p_--\bar{p}_-| \le \epsilon$$
  $$||C-\bar{C}||_{L^\infty} \le \epsilon.$$
  If $\epsilon$ is sufficiently small, then $H$ can be chosen sufficiently large, depending only on $\epsilon$, so that
  $$\sup_i||(\pi(p_+,p_-,C)K)_i-(\bar{\pi}(\bar{p}_+,\bar{p}_-,\bar{C})\bar{K})_i||_{L^\infty} \le \epsilon.$$
\end{proposition}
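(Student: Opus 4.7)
The plan is to compare, coordinate by coordinate, the argument fed to the sigmoid $h_H$ in the definition of $\pi(p_+,p_-,C)K$ against the corresponding linear expression computed from the idealized stack and parameters. For each $i\ge 1$ and each coordinate $j$, set
$$A_{i,j} := p_+(K_{i-1})_j+p_-(K_{i+1})_j+(1-p_+-p_-)(K_i)_j-\tfrac12$$
and
$$\bar A_{i,j} := \bar p_+(\bar K_{i-1})_j+\bar p_-(\bar K_{i+1})_j+(1-\bar p_+-\bar p_-)(\bar K_i)_j-\tfrac12,$$
with $C_j$ and $\bar C_j$ replacing $(K_{-1})_j$ and $(\bar K_{-1})_j$ in the $i=0$ case. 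My first step is a routine triangle-inequality estimate: each of the six ingredients $p_+,p_-,(K_{i-1})_j,(K_i)_j,(K_{i+1})_j,C_j$ differs from its idealized counterpart by at most $\epsilon$, and all coefficients and stack entries lie in $[0,1]$, so the errors compound linearly and yield $|A_{i,j}-\bar A_{i,j}|\le c\epsilon$ for some absolute constant $c$ (a quick accounting gives $c\le 7$).

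The second step is to evaluate $\bar A_{i,j}$ exactly. Since $\bar K$ is one-hot (Definition \ref{stack_vector_encoding}) and the triple $(\bar p_+,\bar p_-,\bar C)$ lies in the allowed domain of $\bar\pi$, which from the discussion just before the proposition is the push/pop/no-op set $\{(1,0),(0,1),(0,0)\}$, the three weighted stack terms collapse to a single one-hot coordinate. Hence $\bar A_{i,j}\in\{+\tfrac12,-\tfrac12\}$: it equals $+\tfrac12$ precisely when the idealized output $(\bar\pi(\bar p_+,\bar p_-,\bar C)\bar K)_{i,j}$ is $1$ and equals $-\tfrac12$ when that output is $0$. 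Combining with the first step, whenever $\epsilon<1/(2c)$ the quantity $A_{i,j}$ has the same sign as $\bar A_{i,j}$ and satisfies $|A_{i,j}|\ge \tfrac12-c\epsilon>0$.

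The third step invokes the sharpness of $h_H$ (Lemma \ref{stack_op_lem_2}, foreshadowed in the opening discussion of Section \ref{model_sec}): choose $H$ large enough, depending only on $\tfrac12-c\epsilon$ and hence only on $\epsilon$, so that $|h_H(A_{i,j})-\mathbf{1}_{A_{i,j}>0}|\le\epsilon$ whenever $|A_{i,j}|\ge\tfrac12-c\epsilon$. Since $\mathbf{1}_{A_{i,j}>0}$ equals the idealized one-hot value by the second step, taking the supremum over $i$ and $j$ yields the required $L^\infty$ bound on $(\pi(p_+,p_-,C)K)_i-(\bar\pi(\bar p_+,\bar p_-,\bar C)\bar K)_i$.

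The only real obstacle is bookkeeping rather than a conceptual barrier: I need to confirm that the constant $c$ is genuinely absolute (it comes from three applications of a bilinear triangle inequality with all factors in $[0,1]$), and to handle the boundary cases $i=0$ (where $C,\bar C$ replace $K_{-1},\bar K_{-1}$ without changing any of the estimates) and indices $i$ past the stack sizes of both sequences (where all three neighbor terms vanish, so $A_{i,j}=\bar A_{i,j}=-\tfrac12$ and the saturation bound is automatic). Once these routine cases are dispatched uniformly in $i$ and $j$, Lemma \ref{stack_op_lem_2} closes the argument.
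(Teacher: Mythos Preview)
Your proposal is correct and follows essentially the same route as the paper's own proof: bound the difference of the pre-activations by $7\epsilon$ via a bilinear triangle-inequality argument (the paper packages this as Lemma \ref{stack_op_lem_1}), observe that the idealized pre-activation lands in $\{0,1\}$ (equivalently, your $\bar A_{i,j}\in\{\pm\tfrac12\}$), and then invoke Lemma \ref{stack_op_lem_2} to saturate the sigmoid, with the $i=0$ case handled by the obvious substitution. Your write-up is slightly more explicit than the paper's in spelling out why the idealized linear combination collapses to a one-hot coordinate, but the argument is the same.
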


To simplify the proof of this proposition, we will first state and prove two lemmas. The first lemma says that if a vector is within a fixed point of its corresponding idealized vector \footnote{Based on fixed point analysis, If differentiable operator or continuous function is within the fixed }, then applying the activation function will make the resulting vector very close to the idealized vector. (This requires a suitably large choice of the sensitivity parameter.) 
\begin{lemma}\label{stack_op_lem_2}
  Let $\bar{V}$ be a vector with components $\bar{V}_i\in\{0,1\}$ and let $V$ be a vector satisfying
  $$||V-\bar{V}||_{L^\infty} \le \epsilon_0$$
  for some $\epsilon_0< \frac12$.
  
  Then for all sufficiently small $\epsilon > 0$, for all $H$ sufficiently large depending on $\epsilon_0$ and $\epsilon$,
  $$\max_i\left|\bar{V}_i -h_H\left(V_i-\frac12\right)\right| \le \epsilon.$$
\end{lemma}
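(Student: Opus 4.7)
The plan is to reduce the lemma to a uniform estimate on the sigmoid $h_H$ applied at points bounded away from zero, and then to choose $H$ large enough to make this estimate smaller than $\epsilon$. The hypothesis $\epsilon_0 < 1/2$ is what allows the separation: it guarantees that after subtracting $\tfrac12$, the input to $h_H$ is bounded away from $0$ with a sign determined by $\bar V_i$, so the sigmoid saturates to the correct value of $0$ or $1$.

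More concretely, I would first fix an index $i$ and split into two cases. If $\bar V_i = 0$, then $|V_i| \le \epsilon_0$, so $V_i - \tfrac12 \le \epsilon_0 - \tfrac12 = -(\tfrac12 - \epsilon_0) < 0$; since $h_H$ is monotone increasing, this gives
\[
0 < h_H\!\left(V_i - \tfrac12\right) \le h_H\!\left(-(\tfrac12 - \epsilon_0)\right) = \frac{1}{1+e^{H(1/2-\epsilon_0)}}.
\]
Symmetrically, if $\bar V_i = 1$, then $V_i - \tfrac12 \ge \tfrac12 - \epsilon_0 > 0$, and
\[
0 < 1 - h_H\!\left(V_i - \tfrac12\right) \le 1 - h_H\!\left(\tfrac12 - \epsilon_0\right) = \frac{1}{1+e^{H(1/2-\epsilon_0)}}.
\]
In either case the error $|\bar V_i - h_H(V_i - \tfrac12)|$ is bounded by the single quantity $\bigl(1+e^{H(1/2-\epsilon_0)}\bigr)^{-1}$, which does not depend on $i$.

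To finish, I would choose $H$ so that this bound is at most $\epsilon$: taking for instance
\[
H \ge \frac{1}{\tfrac12-\epsilon_0}\,\log\!\left(\frac{1-\epsilon}{\epsilon}\right)
\]
suffices, and the right-hand side is finite precisely because $\epsilon_0 < \tfrac12$ and $\epsilon \in (0,1)$. Taking the maximum over $i$ then yields the claimed inequality.

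There is no real obstacle here; the only thing to watch is that the bound on $H$ is allowed to depend on both $\epsilon_0$ and $\epsilon$ (the denominator $\tfrac12-\epsilon_0$ blows up as $\epsilon_0 \to \tfrac12$, which is why the hypothesis requires strict inequality). All subsequent applications in the paper plug in a value of $\epsilon_0$ of the form $C\epsilon$ for some constant $C$, which remains below $\tfrac12$ once $\epsilon$ is sufficiently small, so the ``sufficiently small $\epsilon$'' clause in the statement is what absorbs this constraint.
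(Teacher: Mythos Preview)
Your proof is correct and follows essentially the same approach as the paper: split on $\bar V_i\in\{0,1\}$, use monotonicity of $h_H$ to bound the error by $1-h_H(\tfrac12-\epsilon_0)=h_H(-(\tfrac12-\epsilon_0))$, and then take $H$ large enough that this quantity is at most $\epsilon$. The only difference is that you make the choice of $H$ explicit and add the remark about how the ``sufficiently small $\epsilon$'' clause is used downstream, neither of which appears in the paper's proof.
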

\begin{proof}
  Since $\epsilon_0<\frac12$, choose $H$ sufficiently large so that
  $$h_H\left(-\left(\frac12-\epsilon_0\right)\right) = 1-h_H\left(\frac12-\epsilon_0\right) \le \epsilon.$$
  
  Note that
  $$\left|\bar{V}_i -h_H\left(V_i-\frac12\right)\right| = \left|\bar{V}_i -h_H\left(\bar{V}_i-\frac12+(V_i-\bar{V}_i)\right)\right|.$$
  Pick an arbitrary index $i$. If $\bar{V}_i=1$,
  $$
    \left|\bar{V}_i-h_H\left(V_i-\frac12\right)\right| = 1-h_H\left(\frac12+(V_i-\bar{V}_i)\right)
                                                       \le 1-h_H\left(\frac12-\epsilon_0\right)
    \le \epsilon.
  $$
  If instead $\bar{V}_i=0$,
  $$
    \left|\bar{V}_i-h_H\left(V_i-\frac12\right)\right| = h_H\left(-\frac12+(V_i-\bar{V}_i)\right)
                                                       \le h_H\left(-\left(\frac12-\epsilon_0\right)\right)
                                                       \le \epsilon.
  $$
\qed
\end{proof}

Next we introduce a lemma that helps us relate expressions found in the definitions of $\pi$ and $\bar{\pi}$.
\begin{lemma}\label{stack_op_lem_1}
  Let $X,\bar{X},Y,\bar{Y},Z,\bar{Z}$ be vectors and let $x,\bar{x},y,\bar{y}$ be scalars. Let $e_d$ denote the difference between differentiable and discrete/ideal operator. Suppose that all scalars and all vector components lie in the range $[0,1]$. Suppose furthermore that
  $$||X-\bar{X}||_{L^\infty}\le \epsilon$$
  $$||Y-\bar{Y}||_{L^\infty}\le \epsilon$$
  $$||Z-\bar{Z}||_{L^\infty}\le \epsilon$$
  $$|x-\bar{x}|\le \epsilon$$
  $$|y-\bar{y}|\le \epsilon.$$
  Then
  $$xX+yY+(1-x-y)Z = \bar{x}\bar{X}+\bar{y}\bar{Y}+(1-\bar{x}-\bar{y})\bar{Z} + e_d,$$
  and
  $$||e_d||_{L^\infty}\le 7\epsilon.$$
\end{lemma}
\begin{proof}
  Let
  \begin{align*}
    e_d &= ({e_d})_X+({e_d})_Y+({e_d})_Z \\
    ({e_d})_X &= xX-\bar{x}\bar{X} \\
    ({e_d})_Y &= yY-\bar{y}\bar{Y} \\
    ({e_d})_Z &= (1-x-y)Z-(1-\bar{x}-\bar{y})\bar{Z}.
  \end{align*}
  It suffices to estimate each of these error terms individually.
  \begin{align*}
    ||({e_d})_X||_{L^\infty} &= ||xX-\bar{x}\bar{X}||_{L^\infty} \\
                         &= ||(x-\bar{x})\bar{X}+x(X-\bar{X})||_{L^\infty} \\
                         &\le |x-\bar{x}|||\bar{X}||_{L^\infty}+|x|||X-\bar{X}||_{L^\infty} \\
                         &\le \epsilon||\bar{X}||_{L^\infty}+|x|\epsilon \\
                         &\le 2\epsilon.
  \end{align*}
  By an analogous calculation, we also conclude
  $$||({e_d})_Y||_{L^\infty} \le 2\epsilon.$$
  And
  \begin{align*}
    ||({e_d})_Z||_{L^\infty} &= ||(1-x-y)Z-(1-\bar{x}-\bar{y})\bar{Z}||_{L^\infty} \\
                         &= ||(\bar{x}-x)\bar{Z}+(\bar{y}-y)\bar{Z}+(1-x-y)(Z-\bar{Z})||_{L^\infty} \\
                         &\le |\bar{x}-x|||\bar{Z}||_{L^\infty}+|\bar{y}-y|||\bar{Z}||_{L^\infty}+|1-x-y|||Z-\bar{Z}||_{L^\infty} \\
                         &\le \epsilon ||\bar{Z}||_{L^\infty} + \epsilon ||\bar{Z}||_{L^\infty}+|1-x-y|\epsilon \\
    &\le 3\epsilon
  \end{align*}
  Finally,
  $$||({e_d})||_{L^\infty}\le ||({e_d})_X||_{L^\infty}+||({e_d})_Y||_{L^\infty}+||({e_d})_Z||_{L^\infty} \le 2\epsilon + 2\epsilon + 3\epsilon = 7 \epsilon.$$
\qed
\end{proof}

With these lemmas, we can prove Proposition \ref{pibar_pi_prop}.

\begin{proof} (of Proposition \ref{pibar_pi_prop})

  Given the assumptions, we will prove that for each $i$,
  $$||(\pi(p_+,p_-,C)K)_i-(\bar{\pi}(\bar{p}_+,\bar{p}_-,\bar{C})\bar{K})_i||_{L^\infty} \le \epsilon.$$
  Let us examine the general case $i>0$, from which the special case $i=0$ (top of the stack) will follow.

  Fix $i>0$ and let
  $$V = p_+K_{i-1}+p_-K_{i+1}+(1-p_+-p_-)K_i,$$
  $$\bar{V} = \bar{p}_+\bar{K}_{i-1}+\bar{p}_-\bar{K}_{i+1}+(1-\bar{p}_+-\bar{p}_-)\bar{K}_i.$$
  Note that
  $$((\pi(p_+,p_-,C)K)_i)_j = h_H\left(V_j-\frac12\right)$$
  and
  $$(\bar{\pi}(\bar{p}_+,\bar{p}_-,\bar{C})\bar{K})_i = \bar{V}.$$
  By Lemma \ref{stack_op_lem_2}, provided $\epsilon<\frac1{14}$, it suffices to show that
  $$||V-\bar{V}||_{L^\infty} \le 7 \epsilon.$$
  This follows directly from Lemma \ref{stack_op_lem_1} with $x=p_+$, $X=K_{i-1}$, $y=p_-$, $Y=K_{i+1}$, $Z=K_i$ and the corresponding choices for the barred quantities.

  The special case $i=0$ (top of the stack) can be proved the same way by replacing $K_{i-1}$ with $C$, $K_{i+1}$ with $K_1$, and $K_i$ with $K_0$.
\qed
\end{proof}

In tensor products that will be used momentarily, if any factor is the zero vector $\vec{0}$, the entire expression becomes zero. Therefore, instead of using the top of the stack $K_0$ directly, we will instead use the stack reading vector $R$, which has one additional component that is approximately $1$ when $K_0$ is approximately $\vec{0}$.
\begin{definition}\label{R_def}\emph{(Stack Reading)}
Given the top vector of the stack $K_0$ with dimension $m_2$, we define the stack reading vector $R$ to be a vector of dimension $m_2+1$ with components
$$
R_i=\left\{
\begin{array}{ll}
    (K_0)_i & 0\le i<m_2 \\
    1-||K_0||_{L^\infty} & i=m_2.
\end{array}\right.
$$
\end{definition}

These two vectors are equivalent in the following sense.
\begin{lemma}\label{K_R_lem}
Given a top-of-stack vector $K_0$ and an idealized top-of-stack vector $\bar{K}_0$, let $R$ and $\bar{R}$ be the corresponding stack reading vector and idealized stack reading vector. Then
$$||K_0-\bar{K}_0||_{L^\infty} = ||R-\bar{R}||_{L^\infty}.$$
\end{lemma}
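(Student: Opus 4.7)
The plan is to unpack Definition \ref{R_def} coordinatewise and reduce the identity to the reverse triangle inequality for the $L^\infty$ norm. By construction, the first $m_2$ components of $R$ and $\bar{R}$ agree with those of $K_0$ and $\bar{K}_0$ respectively, so the first $m_2$ components of $R-\bar{R}$ are exactly the components of $K_0-\bar{K}_0$. The extra $(m_2+1)$-st component simplifies, after the additive constant $1$ cancels, to $||\bar{K}_0||_{L^\infty}-||K_0||_{L^\infty}$.

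Taking the $L^\infty$ norm then gives
$$||R-\bar{R}||_{L^\infty}=\max\bigl(||K_0-\bar{K}_0||_{L^\infty},\ |\,||K_0||_{L^\infty}-||\bar{K}_0||_{L^\infty}\,|\bigr).$$
The only substantive step is to observe, via the reverse triangle inequality for the $L^\infty$ norm, that the second argument of the maximum is bounded above by the first. Since the first argument is already attained (being the $L^\infty$ norm of the first $m_2$ components of $R-\bar{R}$), the maximum equals $||K_0-\bar{K}_0||_{L^\infty}$, yielding the claimed equality rather than a mere inequality.

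There is no real obstacle here: the only things to be careful about are writing the extra component with the correct sign and remembering that the relevant reverse triangle inequality applies to the $L^\infty$ norms of the whole vectors $K_0$ and $\bar{K}_0$, not coordinatewise. Equality, as opposed to an upper bound, comes essentially for free because the first $m_2$ coordinates of $R-\bar{R}$ already reproduce $K_0-\bar{K}_0$ exactly, so the appended ``empty stack indicator'' coordinate can only tie with that discrepancy and never exceed it.
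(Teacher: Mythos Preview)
Your proof is correct. The paper's argument proceeds differently: instead of invoking the reverse triangle inequality, it splits into two cases according to whether $\bar{K}_0=\vec{0}$ (so $\bar{R}_{m_2}=1$) or $\bar{K}_0$ is a canonical basis vector (so $\bar{R}_{m_2}=0$), and in each case verifies $|R_{m_2}-\bar{R}_{m_2}|\le ||K_0-\bar{K}_0||_{L^\infty}$ directly from the explicit form of $\bar{K}_0$. Your route via the reverse triangle inequality $\bigl|\,||K_0||_{L^\infty}-||\bar{K}_0||_{L^\infty}\,\bigr|\le ||K_0-\bar{K}_0||_{L^\infty}$ is shorter and makes no use of the idealized one-hot/zero structure of $\bar{K}_0$; in fact it shows the lemma holds for arbitrary $K_0,\bar{K}_0\in[0,1]^{m_2}$, not only when $\bar{K}_0\in\mathcal{B}_{(m_2)}\cup\{\vec{0}\}$. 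The paper's case analysis buys nothing extra here---it is simply a more hands-on verification of the same bound.
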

\begin{proof}
Since it is clear that 
$$||K_0-\bar{K}_0||_{L^\infty} \le ||R-\bar{R}||_{L^\infty},$$
it will suffice to show
$$||R-\bar{R}||_{L^\infty} \le ||K_0-\bar{K}_0||_{L^\infty},$$
which only requires us to confirm that
$$|R_{m_2}-\bar{R}_{m_2}| \le ||K_0-\bar{K}_0||_{L^\infty}.$$
We do this by examining two cases, depending on whether $\bar{R}_{m_2}=1$ or $\bar{R}_{m_2}=0$.

If $\bar{R}_{m_2}=1$, then $\bar{K}_0=\vec{0}$, so
$$|R_{m_2}-\bar{R}_{m_2}|=|(1-||K_0||_{L^\infty})-1|=||K_0||_{L^\infty}=||K_0-\vec{0}||_{L^\infty}=||K_0-\bar{K}_0||_{L^\infty}.$$

If instead $\bar{R}_{m_2}=0$, then there is some index $\hat{i}<m_2$ for which $\bar{R}_{\hat{i}}=(\bar{K}_0)_{\hat{i}}=1$. We have
$$|R_{m_2}-\bar{R}_{m_2}|=|R_{m_2}|=1-||K_0||_{L^\infty} \le 1-(K_0)_{\hat{i}} = (\bar{K}_0)_{\hat{i}}-(K_0)_{\hat{i}} \le ||K_0-\bar{K}_0||_{L^\infty}.$$
\qed
\end{proof}

\subsection{The nnPDA}\label{nnpda_def_sec}

We now turn to the goal of approximating a PDA using a recurrent neural network architecture. Recall Definition \ref{pda_vec_def}, which defined a vector representation of a PDA. We define the \emph{neural network pushdown automaton} (nnPDA) in an analogous manner.


\begin{definition}\label{nnpda_def}\emph{(nnPDA)}
  At any time $t$, the full state of a neural network pushdown automaton is represented by the pair
  $$(Q^t, K^t),$$
  where $Q^t\in[0,1]^n$ is a vector encoding the state and $K^t$ is a differentiable stack according to Definition \ref{continuous_stack_def}.

  The next state pair $(Q^{t+1},K^{t+1})$ is given by the dynamic relations
  \begin{align}
    Q^{t+1}{}_i &= h_H\left((W_Q^{t+1}\cdot Q^t)_i-\frac12\right) \label{nnpda_dynamics_start_eqn} \\
    K^{t+1} &= \pi^{t+1}K^t, \label{nnpda_dynamics_end_eqn}
  \end{align}
  where the transition matrix $W_Q^{t+1}$ and operator $\pi^{t+1}$ are defined as follows.

  Letting $I^{t+1}$ be a one-hot encoding of the $(t+1)$th input character, the transition matrix $W_Q^{t+1}$ is given in component form by
  $$(W_Q^{t+1})_i{}^j = \sum_{k,l}(W_Q)_i{}^{jkl}R^t{}_l I^{t+1}{}_k,$$
  where the components $(W_Q)_i{}^{jkl}$ are chosen to represent the transition rules as in Definition \ref{pda_vec_def},\footnote{Actually, it suffices to use weights that are close to the ideal weights in Definition \ref{pda_vec_def}. We keep them the same in this paper to avoid unnecessary complication.} and the stack reading $R^t$ is determined from $K_0^t$ as described by Definition \ref{R_def}.

  The operator $\pi^{t+1}$ is given by
  $$\pi^{t+1}= \pi(p_+^{t+1}, p_-^{t+1}, C^{t+1}),$$
  where the parameters $p_\pm^{t+1}$ and vector $C^{t+1}$ are
  $$p_{\pm}^{t+1} = \sum_{j,k,l} (W_{p_\pm})^{jkl}R^t{}_lI^{t+1}{}_kQ^t{}_j$$
  $$C^{t+1}{}_i = \sum_{j,k,l}(W_C)_i{}^{jkl}{}R^t{}_lI^{t+1}{}_kQ^t{}_j$$
  where the components $(W_{p_+})^{jkl}$, $(W_{p_-})^{jkl}$, and $(W_C)_i{}^{jkl}$ are chosen to represent the stack action rules as in Definition \ref{pda_vec_def}.
\end{definition}

This definition is illustrated in Figure \ref{nnpda_architecture_fig}.

\begin{figure}
  \caption{\textbf{Neural network pushdown automaton architecture} The nnPDA takes the $(t+1)$th input character represented by $I^{t+1}$ as well as the state represented by $Q^t$ and a reading $R^t$ of the top of the stack determined from $K^t_0$. By applying the weights $W_Q, W_C, W_{p_+}, W_{p_-}$, it computes and outputs the new state represented by $Q^{t+1}$ as well as the parameters for the stack action, represented by $C^{t+1}$, $p_+^{t+1}$, and $p_-^{t+1}$. The stack is then updated by applying the operator $\pi(C^{t+1}, p_+^{t+1}, p_-^{t+1})$. This completes one cycle.}
  \includegraphics[width=0.8\textwidth]{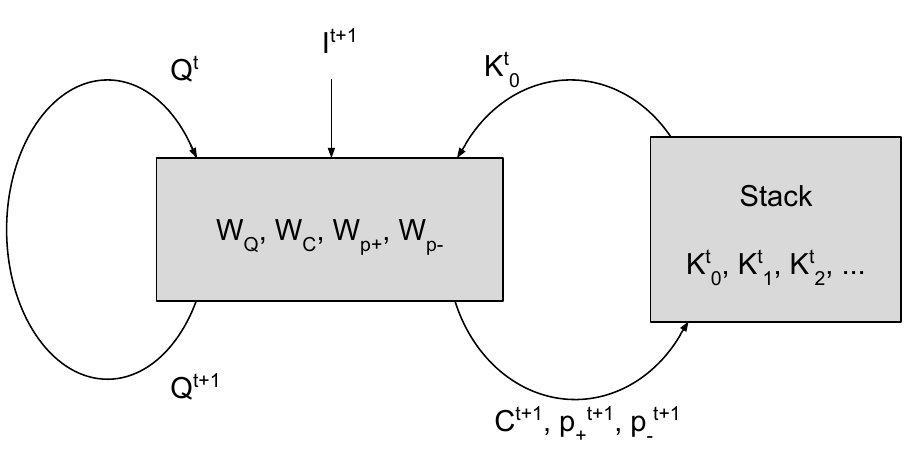}\centering
  \label{nnpda_architecture_fig}
\end{figure}

\subsection{Proof of stability of the nnPDA}\label{nnpda_stability_sec}

The following theorem states that the nnPDA remains close to the idealized PDA, no matter which input string is consumed.

\begin{theorem}\label{thm:nnpda_stabality}
  Let $I^t$ be an arbitrary time-indexed sequence encoding a character string, let $(Q^t,K^t)$ be a state-and-stack sequence governed by equations (\ref{nnpda_dynamics_start_eqn}-\ref{nnpda_dynamics_end_eqn}) in Definition \ref{nnpda_def} for some scalar $H$, and let $(\bar{Q}^t,\bar{K}^t)$ be an idealized state-and-stack sequence governed by equations (\ref{pda_dynamics_start_eqn}-\ref{pda_dynamics_end_eqn}) in Definition \ref{pda_vec_def}.
  
  For any $\epsilon>0$ sufficiently small, if $H$ is sufficiently large (depending on $\epsilon$) and
  $$(Q^{t=0},K^{t=0})=(\bar{Q}^{t=0},\bar{K}^{t=0}),$$
  then for all $t\ge 0$,
  $$||Q^t-\bar{Q}^t||_{L^\infty} \le \epsilon$$
  and
  $$\sup_i||K_i^t-\bar{K}_i^t||_{L^\infty} \le \epsilon.$$
\end{theorem}

This theorem follows by induction on $t$, where the base case ($t=0$) is trivially satisfied by the assumptions, and the inductive step (from $t$ to $t+1$) is addressed by the following proposition.

\begin{proposition}\label{nnPDA_stability_prop}
  Let $(Q^t,K^t)$ be a state-and-stack sequence governed by equations (\ref{nnpda_dynamics_start_eqn}-\ref{nnpda_dynamics_end_eqn}) in Definition \ref{nnpda_def} for some scalar $H$, and let $(\bar{Q}^t,\bar{K}^t)$ be an idealized state-and-stack sequence governed by equations (\ref{pda_dynamics_start_eqn}-\ref{pda_dynamics_end_eqn}) in Definition \ref{pda_vec_def}, and let $e_d$ be the difference between differentiable and discrete operators.

  For any $\epsilon>0$ sufficiently small, if $H$ is sufficiently large and
  $$||Q^t-\bar{Q}^t||_{L^\infty} \le \epsilon$$
  $$\sup_i||K_i^t-\bar{K}_i^t||_{L^\infty} \le \epsilon,$$
  then
  $$||Q^{t+1}-\bar{Q}^{t+1}||_{L^\infty} \le \epsilon$$
  $$\sup_i||K_i^{t+1}-\bar{K}_i^{t+1}||_{L^\infty} \le \epsilon.$$
\end{proposition}
\begin{proof}
  First, we will show that
  $$||Q^{t+1}-\bar{Q}^{t+1}||_{L^\infty} \le \epsilon.$$
  Note that
  $$Q^{t+1}{}_i = h_H\left(V_i-\frac12\right),$$
  where
  $$V_i = \sum_{j,k,l}(W_Q)_i{}^{jkl}R^t{}_lI^{t+1}{}_kQ^t{}_j,$$
  And
  $$\bar{Q}^{t+1}{}_i = \bar{V}_i,$$
  where
  $$\bar{V}_i = \sum_{j,k,l}(W_Q)_i{}^{jkl}\bar{R}^t{}_lI^{t+1}{}_k\bar{Q}^t{}_j.$$
  Observe that
  $$V=\bar{V}+{e_d},$$
  where
  \begin{align*}
    {e_d} &= ({e_d})_1+({e_d})_2 \\
    ({e_d})_1 &= \sum_{j,k,l}(W_Q)_i{}^{jkl}\bar{R}^t{}_lI^{t+1}{}_k \left[Q^t{}_j-\bar{Q}^t{}_j\right]\\
    ({e_d})_2 &= \sum_{j,k,l}(W_Q)_i{}^{jkl}\left[R^t{}_l-\bar{R}^t{}_l\right]I^{t+1}{}_k Q^t{}_j.
  \end{align*}
  The factors in square brackets are each at most size $\epsilon$,\footnote{The difference $R-\bar{R}$ relates to the difference $K_0-\bar{K}_0$ by Lemma \ref{K_R_lem}.} while the remaining factors are at most size $1$. It follows that for some constant $C>0$,
  $$||V-\bar{V}||_{L^\infty}\le C\epsilon.$$
  By Lemma \ref{stack_op_lem_2}, we may conclude that if $\epsilon$ is sufficiently small and $H$ is sufficiently large (depending on $\epsilon$), then
  $$||Q^{t+1}-\bar{Q}^{t+1}||_{L^\infty} \le \epsilon.$$

  By similar arguments, we may also prove that, with possibly additional constraints on $H$,
  $$|p_+^{t+1}-\bar{p}_+^{t+1}|\le \epsilon$$
  $$|p_-^{t+1}-\bar{p}_-^{t+1}|\le \epsilon$$
  $$||C^{t+1}-\bar{C}^{t+1}||_{L^\infty} \le \epsilon.$$
  Given these estimates, by Proposition \ref{pibar_pi_prop}, it follows that
  $$\sup_i||(\pi(p_+^{t+1},p_-^{t+1},C^{t+1})K^t)_i-(\bar{\pi}(\bar{p}_+^{t+1},\bar{p}_-^{t+1},\bar{C}^{t+1})\bar{K}^t)_i||_{L^\infty}\le \epsilon,$$
  which means
  $$\sup_i||K_i^{t+1}-\bar{K}_i^{t+1}||_{L^\infty} \le \epsilon.$$
\qed
\end{proof}

We conclude this section by rewriting Theorem \ref{thm:nnpda_stabality} in a form that more closely resembles the statement in \cite{siegelmann95}. But in particular, we note that the work in this paper does not require unbounded precision and time.

\begin{theorem}
  For any given Pushdown Automaton (PDA) $M$ with n states and m stack symbols, there exists a differentiable nnPDA with n+1 bounded precision neurons that can simulate $M$ in real-time. 
\end{theorem}

\section{Neural Network with Tape Memory}

In this section, we generalize the work done in the previous section from Pushdown Automata to Turing Machines. Most of the heavy lifting has already been done at this point--our approach will be to represent a tape, the memory associated with a Turing Machine, as a pair of stacks.

\subsection{The Tape}\label{tm_tape_sec}
A Turing Machine is a DFA augmented with a tape. The following definition closely follows \cite{hopcroft2001introduction}.

\begin{definition}\emph{(Turing Machine, classical version)}
A Turing machine is a 7-tuple defined as:  $M=\langle Q,\Gamma,b,\Sigma,\delta,q_s,F \rangle$, where $Q=\{q_1(=q_s),q_2,\ldots,q_n\}$ is the set of finite states, $\Gamma =\{s_0,s_1,\ldots,s_m\}$ is the input alphabet set including $b=s_0$, $\Sigma$ is a finite set of tape symbols, $\delta: \Gamma\times Q \to \Gamma\times Q\times op(\Gamma)$ is the transition function or rule for the grammar, $q_s \in Q$ is the initial starting state and $F \subset Q$ is the set of final states. 
\end{definition}
The instantaneous configuration of a Turing Machine is typically defined as a tuple of state, tape, and the location of the read/write head. Typically, the tape is described as an array of characters without end on either side and the head as a signed index into this array. To take advantage of the work done in the previous section, we prefer an alternate representation in which the tape is split into two stacks, the right stack and the left stack. The right stack, denoted as $k$, consists of all symbols on the tape under or to the right of the head. The left stack, denoted as $l$, consists of all symbols left of the head. The top of each stack is the closest to the tape head and the blank symbols of the tape on both sides are omitted as they would be below the bottom of either stack. Although it is clear the stack pair encodes the memory of a tape, in order to ensure that the it properly represents a tape, we must pay careful attention to the operations we apply to each stack simultaneously.


\begin{definition}\emph{(Tape operators, classical version)}
Define the set of tape operators
$$op(\Gamma)=\{\omega_-(\charb_1),...,\omega_-(\charb_{m_2}), \omega_0, \omega_+\}.$$
These operators have the following effect on the tape: $\omega_-(\charc)$ writes $\charc$ and then moves the head to the right, $\omega_0$ has no effect, and $\omega_+$ moves the head to the left.
\end{definition}
In particular, note that while the stack operators parametrize $\pi_+$ and have a single $\pi_-$, instead for the tape operators $\omega_+$ and $\omega_-$, this is reversed. 
There is an asymmetry here--we must pick one stack to have its top lie under the head. By convention, we choose to have the head read from $k$ to be consistent with the PDA. We would also like $\omega_-$ to act on $k$ the same way $\pi_-$ acts on $k$ and $\omega_+$ the same way as $\pi_+$. But for this to happen, we must parametrize $\omega_-$ instead of $\omega_+$. Otherwise, there would be no way to read from $l$. Instead, we indirectly read from the left stack when applying $\pi_+$ to $k$.

The operators $\omega_-(\charc)$, $\omega_0$, and $\omega_+$ act on a double-stack tape the following way.
\begin{align}
\omega_-(\charc)(k, l) &= (\pi_-k, \pi_+(\charc) l)\label{stack_op_start_eqn} \\
\omega_0(k, l) &= (k, l) \\
\omega_+(k, l) &= (\pi_+(l_0)k, \pi_- l).\label{stack_op_end_eqn}
\end{align}

With the stack pair representation of a tape, it is now straightforward to generalize to a differentiable tape using the work developed in the previous section.
\begin{definition}\label{continuous_tape_def}\emph{(Differentiable tape)}
A differentiable tape is a pair of differentiable stacks $(K, L)$, each as defined in Definition \ref{continuous_stack_def}. The stack $K$ is called the \emph{right stack} and the stack $L$ is called the \emph{left stack}. The \emph{head} of the tape reads $K_0$, the top of the stack $K$.
\end{definition}

The following is the differential analogue of the operator $\omega$.
\begin{definition}\emph{(Differentiable tape operator)}
For scalars $p_+$, $p_-$ and vector $C\in[0,1]^n$, we define the tape operator $\omega(p_+,p_-,C)$ as follows.
$$\omega(p_+, p_-, C)(K, L) = (\pi(p_+, p_-, L_0)K, \pi(p_-, p_+, C)L).$$
\end{definition}
Note in particular that the arguments $p_\pm$ for the operator acting on $L$ are reversed, and the operator acting on $K$ has the top of the left stack as its third argument. This way, $\omega(1, 0, C)$, $\omega(0, 0, C)$, and $\omega(0, 1, C)$ approximate $\omega_+$, $\omega_0$, and $\omega_-(\charc)$ respectively.

We now fully define the nnTM architecture. This is directly analogous to Definition \ref{nnpda_def}.
\begin{definition}\label{nntm_def}\emph{(nnTM)}
  At any time $t$, the full state of a neural network Turing Machine is represented by the triplet
  $$(Q^t, K^t, L^t),$$
  where $Q^t\in[0,1]^n$ is a vector encoding the state and $K^t$ and $L^t$ are stacks representing respectively the right and left part of a differentiable tape according to Definition \ref{continuous_tape_def}.

  The next state pair $(Q^{t+1}, K^{t+1}, L^{t+1})$ is given by the dynamic relations
  \begin{align}
    Q^{t+1}{}_i &= h_H\left((W_Q^{t+1}\cdot Q^t)_i-\frac12\right) \label{nntm_dynamics_start_eqn} \\
    (K^{t+1}, L^{t+1}) &= \omega^{t+1}(K^t, L^t), \label{nntm_dynamics_end_eqn}
  \end{align}
  where the transition matrix $W_Q^{t+1}$ and operator $\omega^{t+1}$ are defined as follows.

  Letting $I^{t+1}$ be a one-hot encoding of the $(t+1)$th input character, the transition matrix $W_Q^{t+1}$ is given in component form by
  $$(W_Q^{t+1})_i{}^j = \sum_{k,l}(W_Q)_i{}^{jkl}R^t{}_l I^{t+1}{}_k,$$
  where the components $(W_Q)_i{}^{jkl}$ are chosen to represent the transition rules as in Definition \ref{pda_vec_def}, and the stack reading $R^t$ is determined from $K_0^t$ as described by Definition \ref{R_def}.

  The operator $\omega^{t+1}$ is given by
  $$\omega^{t+1}= \omega(p_+^{t+1}, p_-^{t+1}, C^{t+1}),$$
  where the parameters $p_\pm^{t+1}$ and vector $C^{t+1}$ are
  $$p_{\pm}^{t+1} = \sum_{j,k,l} (W_{p_\pm})^{jkl}R^t{}_lI^{t+1}{}_kQ^t{}_j$$
  $$C^{t+1}{}_i = \sum_{j,k,l}(W_C)_i{}^{jkl}{}R^t{}_lI^{t+1}{}_kQ^t{}_j$$
  where the components $(W_{p_+})^{jkl}$, $(W_{p_-})^{jkl}$, and $(W_C)_i{}^{jkl}$ are chosen to represent the tape action rules as in Definition \ref{pda_vec_def}.
\end{definition}

The following theorem now follows from Theorem \ref{thm:nnpda_stabality} and our formulation of the Turing Machine as a machine with two stacks.
\begin{theorem}
  Let $I^t$ be an arbitrary time-indexed sequence encoding a character string, let $(Q^t, K^t, L^t)$ be a state-and-tape sequence governed by equations (\ref{nntm_dynamics_start_eqn}-\ref{nntm_dynamics_end_eqn}) in Definition \ref{nntm_def} for some scalar $H$, and let $(\bar{Q}^t,\bar{K}^t,\bar{L}^t)$ be an idealized state-and-tape sequence representing a vectorized Turing Machine.

  For any $\epsilon>0$ sufficiently small, if $H$ is sufficiently large (depending on $\epsilon$) and
  $$(Q^{t=0}, K^{t=0}, L^{t=0})=(\bar{Q}^{t=0},\bar{K}^{t=0},\bar{L}^{t=0}),$$
  then for all $t\ge 0$,
  $$||Q^t-\bar{Q}^t||_{L^\infty}\le \epsilon,$$
  $$\sup_i||K_i^t-\bar{K}_i^t||_{L^\infty}\le \epsilon,$$
  $$\sup_i||L_i^t-\bar{L}_i^t||_{L^\infty}\le \epsilon.$$
\end{theorem}

As a corollary, we may conclude the following.
\begin{theorem}\emph{(Stability of nnTM)}
For any given Turing Machine $M$ with $n$ states and $m$ tape symbols, there exists a differentiable nnTM with bounded precision neurons that can accurately represent $M$ for arbitrarily long input strings.
\end{theorem}

\cite{neary2007four} constructed four Universal Turing Machines (UTMs)
$$UTM_{n,m}\in\{UTM_{5,5}, UTM_{6,4}, UTM_{9,3}, UTM_{18,2}\}$$
that can simulate any TM.
\begin{theorem}(\cite{neary2007four})
 Given a deterministic single tape Turing machine $M$ that runs in time $t$ then any of the above UTMs can simulate the computation of $M$ using space $O(n)$ and time $O(t^2)$.
\end{theorem}
In light of this work and the previous theorem, the following stronger theorem holds.
\begin{theorem}
A nnTM with $n=6$ states and $m=4$ tape symbols can stably simulate any TM in $O(t^2)$ time.
\end{theorem}
The above theorem states that nnTM with only $6$ state and $1$ output neuron can simulate any TM in $O(t^2)$ time and solution constructed by differentiable nnTM will always stay stable.

It is worth noting (\cite{siegelmann94}) construction would require minimum 40 unbounded precision state neurons to show equivalence with UTM, such that RNN can simulate any TM in $O(t^8)$ time. Similarly with respect to bounded neuron augmented with memory (\cite{chung2021turing}) construction would require minimum 54 bounded precision state neurons augmented with 2 discrete stack like memory (growing memory) to show their construction is similar to UTM such that model can can simulate any TM in $O(t^6)$ time. Furthermore other form of neural networks such as transformers and Neural GPU's (\cite{attn_turing}) cannot simulate UTM's. All prior constructions do not guarantee stability. 

\section{Discussion and Complexity}

The operations performed by our tensor nnTM directly map to TM state transitions, making the TM encoding straightforward. For a state machine with $n$ states, an input alphabet of size $m_1$, a tape alphabet of size $m_2$, and a tape with maximum capacity $s$, we show how to construct a sparse recurrent network with $n+1$ state neurons, $O(m_1m_2n(m_1+m_2+n))$ weights and a tape memory footprint of size $O(sm_2)$, such that the TM and constructed network accept recursively enumerable languages.

\section{Conclusion}

We defined a vectorized stack memory and a parameterized stack operator that behaves similarly to stack push/pop operations for particular choices of parameters. We used these to construct a neural network pushdown automaton (nnPDA). We showed that our construction is stable: for suitable choices of weights, the nnPDA will closely resemble a corresponding PDA for arbitrarily long strings. (The sense in which the nnPDA and PDA are close is formalized by representing both as vectors and taking the vector difference.)

We then represent a tape as a pair of stacks and adapt our parametrized memory operator, thereby extending our result to a neural network Turing Machine (nnTM) as a stable approximation of a Turing Machine.

We observe that, in light of work by \cite{neary2007four}, this means a nnTM architecture with $n=6$ states and $m=4$ tape symbols can simulate any TM.

\bibliographystyle{IEEE}
\bibliography{jacm_bib}
\end{document}